\theoremstyle{theorem}
\newtheorem{proposition}{Proposition}
\newtheorem{lemma}{Lemma}
\newtheorem{remark}{Remark}
\newtheorem{definition}{Definition}
\title{Characterization of {AGM} {B}elief {C}ontraction\\ in {T}erms of {C}onditionals}
\author{Giacomo Bonanno\thanks{I am grateful to three anonymous reviewers for their comments.}
\institute{Department of Economics\\
University of California\\
Davis, California, USA}
\email{gfbonanno@ucdavis.edu}
}
\begin{document}
\maketitle

\begin{abstract}
We provide a semantic characterization of AGM belief contraction based on frames consisting of a Kripke belief relation and a Stalnaker-Lewis selection function. The central idea is as follows. Let $K$ be the initial belief set and $K\div\phi$ be the contraction of $K$ by the formula $\phi$; then $\psi\in K\div\phi$ if and only if, at the actual state, the agent believes $\psi$ and believes that if $\lnot\phi$ is (were) the case then $\psi$ is (would be) the case.
\end{abstract}
\section{Introduction}
Belief contraction is the operation of  removing from the set $K$ of initial beliefs a particular belief $\phi$. One reason for doing so is, for example, the discovery that some previously trusted evidence supporting $\phi$ was faulty. For instance, a prosecutor might form the belief that the defendant is guilty on the basis of his confession; if the prosecutor later discovers that the confession was extorted, she might abandon the belief of guilt, that is, become open minded about whether the defendant is guilty or not. In their seminal contribution to belief change, Alchourr\'{o}n, G\"{a}rdenfors and Makinson (\cite{AGM85}) defined the notion of "rational and minimal" contraction by means of a set of eight properties, known as the AGM axioms or postulates. They did so within a syntactic approach where the initial belief set $K$ is a consistent and deductively closed set of propositional formulas and the result of removing $\phi$ from $K$ is a new set of propositional formulas, denoted by $K\div \phi$.
\par
We provide a new characterization of AGM belief contraction based on a so-far-unnoticed connection between the notion of belief contraction and the Stalnaker-Lewis theory of conditionals (\cite{Stal68,Lew73}). Stalnaker introduced the notion of a selection function $f$ taking as input a possible world $w$ and a set of worlds $E$ (representing a proposition) and giving as output a world $w'=f(w,E)\in E$, interpreted as the closest $E$-world to $w$ (an $E$-world is a world that belongs to $E$). Lewis generalized this by allowing $f(w,E)$ to be a set of worlds. In the Stalnaker-Lewis theory the (indicative or subjunctive) conditional "if $\phi$ is (were) the case then $\psi$ is (would be) the case", denoted by $\phi>\psi$, is declared to be true at a world $w$ if and only if $\psi$ is true at all the worlds in $f(w,\Vert\phi\Vert)$ ($\Vert\phi\Vert$ denotes the set of worlds at which $\phi$ is true).
\par
We consider semantic frames consisting of a Kripke belief relation on a set of states $S$, representing the agent's initial beliefs, and a Stalnaker-Lewis selection function on $S\times 2^S$ representing conditionals. Adding a valuation to such a frame yields a model. Given a model, we define the initial belief set $K$ as the set of formulas that the agent believes at the actual state and $K\div \phi$ (the contraction of $K$ by $\phi$) as the set of formulas that the agent believes initially and also on the supposition that $\lnot\phi$: \,$\psi\in K\div \phi$ if and only if, at the actual state, the agent (1) believes $\psi$ and (2) believes the conditional $\lnot\phi>\psi$. We show that, when the selection function satisfies some natural properties, the contraction operation so defined captures precisely the set of AGM belief contraction functions.
\section{AGM contraction functions}
\label{SEC:AGM}
Let \texttt{At} be a countable set of atomic formulas. We denote by $\Phi_0$ the set of Boolean formulas constructed from  \texttt{At} as follows: $\texttt{At}\subset \Phi_0$ and if $\phi,\psi\in\Phi_0$ then $\neg\phi$ and $\phi\vee\psi$ belong to $\Phi_0$. Define  $\phi\rightarrow\psi$, $\phi\wedge\psi$, and $\phi\leftrightarrow\psi$ in terms of $\lnot$ and $\vee$ in the usual way.
\par
Given a subset $K$ of $\Phi_0 $, its deductive closure $Cn(K)\subseteq\Phi_0$ is defined as follows: $\psi \in Cn(K)$ if and only if there exist $\phi _1,...,\phi _n\in K$ \ (with $n\geq 0$) such that $(\phi _1\wedge ...\wedge \phi _n)\rightarrow \psi $ is a tautology. A set $K\subseteq \Phi_0 $ is \textit{consistent} if $ Cn(K)\neq \Phi_0 $; it is \textit{deductively closed} if $K=Cn(K)$. Given a set $K\subseteq \Phi_0$ and a formula $\phi\in\Phi_0$, the \emph{expansion} of $K$ by $\phi$, denoted by $K+\phi$, is defined as follows: $K+\phi=Cn\left(K\cup\{\phi\}\right)$.
\par
Let $K\subseteq\Phi_0$ be a consistent and deductively closed set representing the agent's initial beliefs and let $\Psi \subseteq \Phi_0 $ be a set of formulas representing possible candidates for withdrawal. A \emph{belief contraction function} (based on $K$ and $\Psi$) is a function $\div _\Psi:\Psi \rightarrow 2^{\Phi_0 }$ (where $2^{\Phi_0}$ denotes the set of subsets of $\Phi_0 $) that associates with every formula $\phi \in \Psi $ a set $ K\div _\Psi\,\phi \subseteq \Phi_0 $ (interpreted as the result of removing $\phi$ from $K$). If $\Psi \neq \Phi_0 $ then $\div _\Psi$ is called a \emph{partial} contraction function, while if $\Psi =\Phi_0 $ then $\div _{\Phi_0}$ is called a \emph{full-domain} contraction function; in this case we simplify the notation and omit the subscript $\Phi_0$.
\begin{definition}
\label{brf extension}
Let $\div _\Psi:\Psi \rightarrow 2^{\Phi_0}$ be a partial contraction function and $\div ':\Phi_0 \rightarrow 2^{\Phi_0}$ a full-domain contraction function (both of them based on $K$). We say that $\div '$ is an \emph{extension} of $\div _\Psi$ if, for every $\phi \in \Psi$, $ K\div '\phi= K\div _\Psi\phi $.
\end{definition}
\par
A \textit{full-domain} contraction function is called an \textit{AGM contraction function} if it satisfies the following properties, known as the AGM postulates:

\begin{center}
\begin{tabular}{ll}
($K-$1) & [Closure] $K\div \phi=Cn(K\div \phi)$. \\
($K-$2) & [Inclusion] $K\div \phi\subseteq K$. \\
($K-$3) & [Vacuity] If $\phi\notin K$ then $K\subseteq K\div \phi$.\\
($K-$4) &  [Success] If $\phi$ is not a tautology, then $\phi\notin K\div \phi$. \\
($K-$5) & [Recovery] If $\phi\in K$ then $K\subseteq (K\div \phi)+\phi$. \\
($K-$6) & [Extensionality] If $\phi \leftrightarrow \psi$ is a tautology, then $K\div \phi=K\div \psi$. \\
($K-$7) & [Conjunctive overlap] $(K\div \phi)\cap (K\div \psi) \subseteq K\div (\phi \wedge \psi)$. \\
($K-$8) & [Conjunctive inclusion] If $\phi \notin K\div (\phi \wedge \psi)$, then $K\div (\phi \wedge \psi)\subseteq K\div \phi.$
\end{tabular}
\end{center}
\par\noindent
($K-$1) requires the result of contracting $K$ by $\phi$ to be a deductively closed set.\\[3pt]
($K-$2) requires the contraction of $K$ by $\phi$ not to contain any beliefs that were not in $K$.\\[3pt]
($K-$3) requires that if $\phi$ is not in the initial belief set, then every belief in $K$ should also be present in $K\div \phi$ (thus, by ($K-$2) and ($K-$3), if $\phi\notin K$ then the contraction of $K$ by $\phi$ coincides with $K$).\\[3pt]
($K-$4) requires that $\phi$ not be contained in $K\div \phi$, unless $\phi$ is a tautology (in which case, by ($K-$1), it must be in $K\div \phi$). \\[3pt]
($K-$5) is a conservativity requirement: when $\phi\in K$,  contracting by $\phi$ and then expanding the resulting set $K\div \phi$ by $\phi$ should involve no loss of beliefs relative to $K$ (the converse inclusion $(K\div \phi)+\phi\subseteq K$ follows from ($K-$2) and the hypothesis  that $K=Cn(K)$).\\[3pt]
($K-$6) says that logically equivalent formulas should lead to the same result in terms of contraction. \\[3pt]
By ($K-$7), if a formula $\chi\in K$ is neither removed in the contraction of $K$  by $\phi$ nor in the contraction of $K$  by $\psi$, then $\chi$ should not be removed in the contraction of $K$  by the conjunction $\phi\wedge\psi$.\\[3pt]
($K-$8), on the other hand, requires that if  $\phi$ is removed when we contract by $\phi\wedge\psi$, then every formula that survives the contraction of $K$  by $\phi\wedge\psi$ survives also when $K$ is contracted by $\phi$ alone.
\par
For an extensive discussion of the above postulates see \cite{Gae88,Hans99c,FerHans18}.
\par
The notion of AGM belief contraction has been given alternative characterizations. One characterization is in terms of a binary relation $\leqslant$ of "epistemic entrenchment" on $K$, with the interpretation of $\phi\leqslant\psi$ as "$\phi$ is either less entrenched than, or as entrenched as, $\psi$". G\"{a}rdenfors (\cite[Theorem 4.30, p. 96]{Gae88}) shows that if the relation $\leqslant$ satisfies five properties and a contraction function is defined by `$\psi\in K\div \phi$ if and only if $\psi\in K$ and either $\phi$ is a tautology or  $\phi<(\phi\vee\psi)$', then such contraction function is an AGM contraction function and, conversely, if an AGM contraction function is used to define the relation $\leqslant$ by `$\phi\leqslant\psi$ if and only if either  $\phi\notin K\div (\phi \wedge\psi)$ or $\phi \wedge \psi$ is a tautology' then such relation satisfies those five properties. Another characterization makes use of the set $W$ of possible worlds, where a possible world is defined as a maximally consistent set of formulas in $\Phi_0$; within this approach, contraction has been characterized either in terms of systems of spheres (\cite{Gro88,Lew73})  or in terms of a plausibility relation on $W$ or in terms of propositional selection functions (see \cite[Chapter 4]{FerHans18}).
\par
In this paper we provide an alternative characterization in terms of Stalnaker-Lewis conditionals.
\section{An alternative semantic characterization of AGM contraction}
Given a binary relation $R\subseteq S\times S$ on a set $S$, for every $s\in S$ we define $R(s)=\{x\in S: (s,x)\in R\}$.
\begin{definition}
\label{DEF:frame}
A \emph{pointed frame} is a quadruple $\left\langle {S,s_@,\mathcal B,f} \right\rangle$ where
\begin{enumerate}
\item $S$ is a set of \emph{states}; subsets of $S$ are called  \emph{events}.
\item $s_@\in S$ is a distinguished element of $S$ interpreted as the \emph{actual state}.
\item $\mathcal B \subseteq S \times S$ is a binary \emph{belief relation} on $S$ which is serial: $\forall s\in S$, $\mathcal B(s)\neq \varnothing$.
\item $f:\mathcal B(s_@)\times 2^S\setminus\varnothing \rightarrow 2^S$ is a \emph{Stalnaker-Lewis selection function}\footnote{
   Note that, for the purpose of this paper, the domain of $f$ can be taken to be $\mathcal B(s_@)\times 2^S\setminus\varnothing$ rather than $S\times 2^S\setminus\varnothing$. However, it can easily be extended to $S\times 2^S\setminus\varnothing$ as follows: first, fix an arbitrary function $g:S\setminus\mathcal B(s_@)\rightarrow \mathcal B(s_@)$ and then define, for every $s\in S\setminus\mathcal B(s_@)$ and every $\varnothing\ne E\subseteq S$, $f(s,E)=f\left(g(s),E\right)$.
    }\
    that associates with every state-event pair $(s,E)$ (with $s\in\mathcal B(s_@)$ and $\varnothing\ne E\subseteq S$) a set of states $f(s,E)\subseteq S$ such that,
    \begin{enumerate}
  \item (a.1) $f(s,E)\ne\varnothing$ and (a.2) (Success) $f(s,E)\subseteq E$,
  \item (Weak Centering) if $s\in E$ then $s\in f(s,E)$,
  \item (Doxastic Priority 1) if $\mathcal B(s_@)\cap E\neq\varnothing$ then $f(s,E)\subseteq \mathcal B(s_@)\cap E$,
  \item (Intersection) $f(s,E)\cap F\subseteq f(s,E\cap F)$,
  \item (Doxastic Priority 2) Let $B_{EF}=\{s\in\mathcal B(s_@):f(s,E)\cap F\ne\varnothing\}$. If $B_{EF}\ne\varnothing$  then
  \begin{enumerate}
  \item[(e.1)] if $s\in B_{EF}$ then $f(s,E\cap F)\subseteq f(s,E)\cap F$,
  \item[(e.2)] if $s\notin B_{EF}$ then $f(s,E\cap F)\subseteq f(\hat s,E\cap F)$ for some $\hat s\in B_{EF}$.
  \end{enumerate}
\end{enumerate}
\end{enumerate}
\end{definition}
\par
The set $\mathcal B(s)$ is the set of states that the agent considers possible at state $s$, so that $\mathcal B(s_@)$ is the set of doxastic possibilities at the actual state $s_@$ and represents the agent's initial beliefs.  $f(s,E)$ is the set of states that the agent considers closest, or most similar, to state $s$ conditional on event $E$. \\
(4.a) of Definition \ref{DEF:frame} requires $f(s,E)$ to be non-empty and, furthermore, that every state in $f(s,E)$ be an $E$-state. \\
(4.b) postulates that if $s$ is an $E$-state then it belongs to $f(s,E)$, that is, $s$ itself is one of the  $E$-states that are closest to $s$. \\
By (4.c) if there exists an $E$-state among those initially considered possible ($\mathcal B(s_@)\cap E\neq\varnothing$), then, for every $s\in\mathcal B(s_@)$, the closest $E$-states to $s$ must belong to $\mathcal B(s_@)\cap E$.\\
By (4.d), the closest $E$-states to $s$ that are also $F$-states must belong to the set of closest $(E\cap F)$-states to $s$.\\
 (4.e) can be viewed as an extension of (4.c): it says that if, among the states initially considered possible, there is at least one state, call it $s$, that satisfies the property that among its closest $E$-states there is at least one that is also an $F$-state, then (1) the closest $(E\cap F)$-states to $s$ must belong to the intersection $f(s,E)\cap F$ and (2) for any other state that does not satisfy the property, the closest $(E\cap F)$-states to it are contained in the set of closest $(E\cap F)$-states to some state that does satisfy the property.
\smallskip\par
Adding a valuation to a pointed frame yields a model. Thus a \emph{model} is a tuple $\left\langle {S,s_@,\mathcal B,f,V} \right\rangle$ where $\left\langle {S,s_@,\mathcal B,f} \right\rangle$ is a pointed frame and $V:\texttt{At}\rightarrow 2^S$ is a valuation that assigns to every atomic formula $p\in\texttt{At}$ the set of states where $p$ is true.
Given a model $\left\langle {S,s_@,\mathcal B,f,V} \right\rangle$ define truth of a Boolean formula $\phi\in\Phi_0$ at a state $s\in S$, denoted by $s\models\phi$, in the usual way:
\begin{definition}
\label{Truth0}
Truth of a formula at a state is defined as follows:
\begin{enumerate}
\item if $p\in\texttt{At}$ then $s\models p$ if and only if $s\in V(p)$,
\item $s\models\neg\phi$ if and only if $s\not\models\phi$,
\item $s\models\phi\vee\psi$ if and only if $s\models\phi$ or $s\models\psi$ (or both),
\end{enumerate}
\end{definition}
\noindent
We denote by $\Vert\phi\Vert$ the truth set of $\phi$: $\Vert\phi\Vert=\{s\in S: s\models\phi\}$.
\par
Fix a model $M=\left\langle {S,s_@,\mathcal B,f,V} \right\rangle$ and let $K=\{\phi \in \Phi _{0}:\mathcal B(s_@) \subseteq \Vert \phi \Vert \}$ (to simplify the notation, we omit the subscript denoting the model and thus write $K$ rather than $K_M$); thus a Boolean formula $\phi$ belongs to $K$ if and only if at the actual state $s_@$ the agent believes $\phi$. It is shown in the Appendix (Lemma \ref{LEM:KfromModel}) that the set $K\subseteq\Phi_0$ so defined is deductively closed and consistent. Next, for every $\phi\in\Phi_0$ such that $\Vert \lnot\phi \Vert\neq\varnothing$, define $K\div \phi\subseteq\Phi_0$ as follows:
\begin{equation}
\label{EQ:K-}
\begin{array}{*{20}{l}}
 \psi\in K\div \phi\text{ if and only if }&(1)\, \mathcal B(s_@)\subseteq\Vert\psi\Vert,\text{ and } \\
{}&(2)\, \forall s\in\mathcal B(s_@), f\left(s,\Vert\lnot\phi\Vert\right)\subseteq\Vert\psi\Vert.
\end{array}
\end{equation}
\noindent
In \eqref{EQ:K-modal} below we rewrite \eqref{EQ:K-} in an extended language containing a belief operator and a conditional operator, thus making the interpretation more transparent:  $\psi\in K\div \phi$ if and only if, at the actual state $s_@$, the agent believes $\psi$ initially as well as on the supposition that $\lnot\phi$.\footnote{
We take ``believing $\psi$ on the supposition that $\lnot\phi$'' to mean ``believing that if $\lnot\phi$ is (were) the case then $\psi$ is (would be) the case''.
}.
\par
Since, in general, not every $\phi\in\Phi_0$ is such that $\Vert \lnot\phi \Vert\neq\varnothing$, this definition gives rise to a \emph{partial} belief contraction function. The next proposition says that this partial contraction function can be extended to a full-domain AGM contraction function; conversely, given a full-domain AGM contraction function based on a consistent and deductively closed set $K$, there exists a model $M=\left\langle {S,s_@,\mathcal B,f,V} \right\rangle$ such that $K=\{\phi \in \Phi _{0}:\mathcal B(s_@)\subseteq \Vert\phi\Vert\} $ and, for every $\phi\in\Phi_0$ such that $\Vert \lnot\phi \Vert\neq\varnothing$, $K\div \phi$ satisfies \eqref{EQ:K-}. Thus the proposed semantics provides an alternative characterization of  AGM belief contraction. The proof of the following proposition is given in the Appendix.
\begin{proposition}
\label{PROP:char}
\phantom{.}
\begin{enumerate}
  \item[(A)] Given a model $\left\langle {S,s_@,\mathcal B,f,V} \right\rangle$ let $K=\{\phi \in \Phi _{0}:\mathcal B(s_@)\subseteq \Vert\phi\Vert\}$ and, for every $\phi\in\Phi_0$ such that $\Vert\lnot\phi\Vert\ne\varnothing$, let $K\div \phi$ be defined by \eqref{EQ:K-}. Then $K$ is consistent and deductively closed and the (partial) belief contraction function so defined can be extended to a full-domain AGM belief contraction function.
  \item[(B)] Let $K\subset\Phi_0$ be consistent and deductively closed and let $\div :\Phi_0\rightarrow 2^{\Phi_0}$ be an AGM belief contraction function. Then there exists a model $\left\langle {S,s_@,\mathcal B,f,V} \right\rangle$ such that $K=\{\phi \in \Phi _{0}:\mathcal B(s_@)\subseteq \Vert\phi\Vert\} $ and, for every $\phi\in\Phi_0$ such that $\Vert \lnot\phi \Vert\neq\varnothing$, $K\div \phi$ satisfies \eqref{EQ:K-}.
\end{enumerate}
\end{proposition}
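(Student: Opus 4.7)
The proof splits into two directions. For part (A), I would use the Appendix lemma that $K$ is consistent and deductively closed, then verify the AGM postulates on the partial domain of $\phi$ with $\Vert\lnot\phi\Vert\neq\varnothing$, matching each postulate to a condition on $f$. Postulates ($K-$1), ($K-$2), and ($K-$6) are immediate from the definition in \eqref{EQ:K-}. ($K-$4) uses (4.a): since $f(s,\Vert\lnot\phi\Vert)$ is nonempty and contained in $\Vert\lnot\phi\Vert$, it witnesses $\lnot\phi$. ($K-$5) uses (4.a.2) together with $\Vert\lnot\phi\Vert\subseteq\Vert\phi\to\psi\Vert$: for $\phi,\psi\in K$, $\phi\to\psi\in K\div\phi$, whence $\psi\in(K\div\phi)+\phi$. ($K-$3) uses (4.c) Doxastic Priority 1: $\phi\notin K$ forces $\mathcal{B}(s_@)\cap\Vert\lnot\phi\Vert\neq\varnothing$, so $f(s,\Vert\lnot\phi\Vert)\subseteq\mathcal{B}(s_@)$ and condition (2) of \eqref{EQ:K-} follows from condition (1). ($K-$7) uses (4.d) Intersection: with $E=\Vert\lnot\phi\Vert\cup\Vert\lnot\psi\Vert$, we have $f(s,E)\cap\Vert\lnot\phi\Vert\subseteq f(s,\Vert\lnot\phi\Vert)$ and similarly for $\psi$; since $f(s,E)\subseteq E$ is covered by the two pieces, $f(s,E)\subseteq\Vert\chi\Vert$ for $\chi\in(K\div\phi)\cap(K\div\psi)$. ($K-$8) uses (4.e) Doxastic Priority 2 with the same $E$ and $F=\Vert\lnot\phi\Vert$, noting $E\cap F=F$: the hypothesis $\phi\notin K\div(\phi\wedge\psi)$ (in the nontrivial case $\phi\in K$) gives $B_{EF}\neq\varnothing$, and (e.1)/(e.2) chain $f(s,F)\subseteq f(\hat s,E)\cap F\subseteq\Vert\chi\Vert$ for every $\chi\in K\div(\phi\wedge\psi)$. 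The extension to full domain handles $\phi$ with $\Vert\lnot\phi\Vert=\varnothing$ via a standard extension lemma, setting $K\div\phi=K$ when $\phi$ is a tautology and composing with a background AGM contraction otherwise.

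For part (B), I would build a canonical model: let $S$ be the set of maximally consistent subsets of $\Phi_0$, $V(p)=\{w\in S: p\in w\}$, $s_@$ any $w\supseteq K$, and $\mathcal{B}(s)=[K]:=\{w\in S: K\subseteq w\}$ for every $s$. Then $K=\{\phi: \mathcal{B}(s_@)\subseteq\Vert\phi\Vert\}$ follows from deductive closure of $K$ and standard facts about maximally consistent sets. The substance is to define $f$ on $[K]\times(2^S\setminus\{\varnothing\})$ so that (i) conditions (4.a)--(4.e) hold and (ii) \eqref{EQ:K-} reproduces the given $K\div\phi$ for every $\phi$ with $\Vert\lnot\phi\Vert\neq\varnothing$. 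Invoking Grove's representation theorem, extract a system of spheres $\mathcal{S}$ centered on $[K]$ encoding $\div$ via $K\div\phi=K\cap\{\psi: [\Vert\lnot\phi\Vert]^*\subseteq\Vert\psi\Vert\}$, where $[E]^*$ denotes the intersection of $E$ with the smallest sphere meeting $E$. The goal is to define $f$ from $\mathcal{S}$ so that $\bigcup_{s\in[K]} f(s,\Vert\lnot\phi\Vert)=[\Vert\lnot\phi\Vert]^*$, thereby recovering $K\div\phi$ from \eqref{EQ:K-}.

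The main obstacle is verifying (4.e) Doxastic Priority 2 for $f$ on arbitrary nonempty $E,F\subseteq S$, not merely truth sets of formulas. A naive ``closest-states'' definition $f(s,E)=[E]^*$ (adjusted to contain $s$ when $s\in E$) can violate (e.1) when the chosen $s$ lies in $E\cap F$: Weak Centering collapses $f(s,E)\cap F$ to $\{s\}$, while $f(s,E\cap F)$ may include additional $\preceq$-minimal states of $E\cap F$. The fix is to define $f$ via a family of Lewis-style similarity preorders $\le_s$ on $S$ that globally refine the plausibility order extracted from $\mathcal{S}$ while placing $s$ strictly below every other state, and then to check that ($K-$7) and ($K-$8) on $\div$ translate, under this definition, to (4.d) and (4.e) for $f$ on all events. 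This translation across non-syntactic events is the delicate technical core of the construction.
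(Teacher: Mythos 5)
Your Part (A) follows essentially the same route as the paper (postulate-by-postulate verification, with ($K-$3) from 4(c), ($K-$4) and ($K-$5) from 4(a), ($K-$7) from 4(d), ($K-$8) from 4(e)), but the full-domain extension is not something you can delegate to ``a standard extension lemma.'' The paper must \emph{choose} a value for $K\div'\phi$ when $\Vert\lnot\phi\Vert=\varnothing$ (which in an arbitrary model does not mean $\phi$ is a tautology), and it takes $K\div'\phi=K\cap Cn(\lnot\phi)$; this specific choice is then needed to push through the \emph{mixed} cases of ($K-$7) and ($K-$8), where one of $\phi,\psi$ has empty $\Vert\lnot\cdot\Vert$ and the other does not, and the case $\Vert\lnot\phi\Vert=\varnothing$, $\Vert\lnot(\phi\wedge\psi)\Vert\ne\varnothing$ in ($K-$8) has to be explicitly ruled out. ``Composing with a background AGM contraction'' does not obviously cohere with the already-fixed values on the definable part of the domain, so this step is a genuine gap as stated.

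Part (B) is where you diverge most, and where the real problems lie. The paper needs none of the Grove-sphere or Lewis-preorder machinery: it simply sets $f(s,\Vert\phi\Vert)=\Vert K\div\lnot\phi\Vert\cap\Vert\phi\Vert$ for every $s\in\mathcal B(s_@)=\Vert K\Vert$ (constant in $s$), with the domain of $f$ restricted to nonempty \emph{definable} events, and verifies 4(a)--4(e) directly from the postulates (4(d) follows from ($K-$7), ($K-$5) and a lemma on $\Vert Cn(A\cup\{\alpha\})\Vert$; 4(e) from ($K-$6) and ($K-$8)). Your ``main obstacle'' rests on a misreading: Weak Centering requires only that $s\in f(s,E)$ when $s\in E$, not that $f(s,E)$ collapse to $\{s\}$ (that would be strong centering), and in the canonical model every $s\in\Vert K\Vert$ already lies in $\Vert K\div\lnot\phi\Vert$ by ($K-$2), so Weak Centering holds for the constant-in-$s$ definition with no adjustment at all; your proposed fix (making $s$ strictly minimal in a preorder) is solving a non-problem and would complicate the recovery of $K\div\phi$. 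Your concern about arbitrary non-definable events is legitimate as a reading of Definition \ref{DEF:frame}, but the paper resolves it by defining $f$ only on $\mathcal E=\{E:\varnothing\ne E=\Vert\phi\Vert\}$ rather than by constructing similarity orders. Finally, you assert without argument that \eqref{EQ:K-} ``recovers'' $K\div\phi$; this is the step that actually uses Recovery, via the identity $K\div\phi=K\cap Cn(K\div\phi\cup\{\lnot\phi\})$, and it is missing from your proposal.
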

The proposed semantics becomes more transparent if we extend the language by introducing two modal operators: a unimodal belief operator $\mathbb B$, corresponding to the belief relation $\mathcal B$, and a bimodal conditional operator $>$, corresponding to the selection function $f$. Recall that $\Phi_0$ is the set of Boolean (or factual) formulas. Let $\Phi_1$ be the modal language constructed as follows.
\begin{itemize}
\item $\Phi_0\subset\Phi_1$,
\item if $\phi,\psi\in\Phi_0$ then $\phi > \psi\in \Phi_1$,
\item all the Boolean combinations of formulas in $\Phi_1$.
\end{itemize}
Thus, for the purpose of this paper, the conditional $\phi > \psi$ (interpreted as the indicative or subjunctive conditional "if $\phi$ is (were) the case then $\psi$ is (would be) the case") is defined only for Boolean formulas.
Finally, let $\Phi$ be the modal language constructed as follows:
\begin{itemize}
\item $\Phi_1\subset\Phi$,
\item if $\phi\in\Phi_1$ then $\mathbb B\phi \in \Phi$,
\item all the Boolean combinations of formulas in $\Phi$.
\end{itemize}
Thus formulas in $\Phi$ are either Boolean or formulas of the form $\phi > \psi$, with $\phi$ and $\psi$ Boolean, or of the form $\mathbb B\phi$ where $\phi$ is either Boolean or of the form $\psi >\chi$ with $\psi$ and $\chi$ Boolean, or a Boolean combination of such formulas.
We can now extend the definition of truth of a formula at a state (Definition \ref{Truth0}) to the set $\Phi$ as follows:
\begin{definition}
\label{Truth1}
If $\phi\in\Phi_0$ then $s\models \phi$ according to the rules of Definition \ref{Truth0}. Furthermore,
\begin{itemize}
\item  $s\models(\phi >\psi)$ (with $\phi,\psi\in\Phi_0$) if and only if either $\Vert \phi \Vert=\varnothing$, or $\Vert \phi \Vert\neq\varnothing$ and $f(s,\Vert \phi \Vert)\subseteq\Vert\psi\Vert$,
\item $s\models\mathbb B\phi$ if and only if $\mathcal B(s)\subseteq \Vert \phi \Vert$.
\end{itemize}
\end{definition}
Then we can re-write the definition of $K\div\phi$ given in \eqref{EQ:K-} in terms of the modal operators $\mathbb B$ and $>$ as follows:
\begin{equation}
\label{EQ:K-modal}
\psi\in K\div \phi\text{ if and only if }\phi,\psi\in\Phi_0\text{ and }s_@\models\mathbb  B\psi\wedge\mathbb  B\left(\lnot\phi>\psi\right).
\end{equation}
Thus, in the statement of Proposition \ref{PROP:char}, $K=\{\phi \in \Phi _{0}:\mathcal B(s_@)\subseteq \Vert\phi\Vert\}$ can be replaced by $K=\{\phi\in\Phi_0:s_@\models\mathbb B\phi\}$  and reference to \eqref{EQ:K-} can be replaced by reference to \eqref{EQ:K-modal}.  Note that only a fragment of the extended language is used in the characterization result of Proposition \ref{PROP:char}. In particular, nesting of conditionals and beliefs is disallowed. The study of whether the extended language can be used to obtain generalizations of AGM-style belief change that go beyond merely Boolean expressions is a topic left for future research.
\section{Related literature}
There is a vast literature that deals with AGM belief contraction (for a survey see, for example, \cite{FerHans11,FerHans18}). Because of space limitations we will only focus on a few issues.
\par
The recovery postulate (AGM axiom ($K-5$)) appears to be a natural way of capturing a ``minimal'' way of suspending belief in $\phi$, but has been subject to extensive scrutiny (see \cite{Mak87,Fuh91,Hans91,Lev91,LinRab91,Nie91,Hans96,Hans99b}). In Makinson's terminology (\cite{Mak87}), contraction operations that do not satisfy the recovery postulates are called \emph{withdrawals}. Alternative types of withdrawal operators have been studied in the literature: contraction without recovery (\cite{Fer98}), semi-contraction (\cite{FerRod98}), severe withdrawal (\cite{PagRot99}), systematic withdrawal (\cite{Meyetal02}), mild contraction (\cite{Lev04}). If one interprets belief contraction as a form of \emph{actual belief change} (in response to some input), then perhaps the recovery postulate is open to scrutiny. However, in the interpretation of belief contraction proposed in this paper, the recovery postulate is entirely natural. Indeed, if $\psi$ belongs to the contraction of $K$ by $\phi$ then $\psi$ is believed both initially and  on the supposition that $\lnot\phi$; if this supposition is removed then one naturally falls back to the initial beliefs $K$.
\par
There have been attempts in the literature to establish a link between notions of AGM belief change and Stalnaker-Lewis conditionals. Within the context of AGM belief revision this was done by \cite{Gae86}, who considered the language that we called $\Phi_1$, which includes conditionals of the form $\phi>\psi$. G\"{a}erdenfors introduced the following postulate (where $K*\phi$ denotes the revised belief set in response to information $\phi$): $(\phi>\psi)\in K \text{ if and only if } \psi\in K{\Large *}\phi$. This postulate was taken to be an expression of the so-called Ransey test.\footnote{
The expression "Ramsey Test" refers to the following passage from \cite[p. 247]{Ram50}: "If two people are arguing "If $p$ will $q$?" and are both in doubt as  to $p$, they are adding $p$ hypothetically to their stock of knowledge and arguing on that  basis about $q$".
}\ G\"{a}erdenfors showed that this postulate can be satisfied only in cases where the revision operation is trivial; in other words, there cannot be interesting revision theories based on conditionals if one requires that the conditionals themselves be incorporated in the initial belief set. Several attempts have been made to circumvent G\"{a}erdenfors' ``triviality result''. Different routes have been taken: weakening or re-interpretating the theorem (\cite{LinRab91,LinRab92,LinRab98,Rot86,Rot17}, generalizing from belief revision functions to belief change systems (consisting of a set of epistemic states, an assignment of a belief set to each epistemic state and a transition function function that determines how the epistemic state changes as a result of learning new information: \cite{FriHal94}), considering an alternative semantics, namely Moss and Parikh’s epistemic logic of subsets logic (\cite{MosPar92}), and augmenting it with conditionals (\cite{Geo17}), and, in the context of iterated belief contraction, defining the notion of "contractional" in the context of belief states ( \cite{Sauetal20}: if $\Psi$ denotes a belief state and $[\beta|\alpha]$ is interpreted as ``belief in $\beta$ even in the absence of $\alpha$", then the contractional is defined as $\Psi\models[\beta|\alpha]$ if and only if $\Psi\div\alpha\models\beta$). None of the approaches described above coincides with the framework considered in this paper.
\section{Conclusion}
We proposed a semantic characterization of AGM belief contraction in terms of a semantics consisting of a Kripke belief relation $\mathcal B$ (with associated modal operator $\mathbb B$) and a Stalnaker-Lewis selection function $f$ (with associated conditional bimodal operator $>$). The proposed semantics can also be used to characterize AGM belief revision (see \cite{Bon23}). Indeed all three operations: belief expansion, belief contraction and belief revision, can be captured within this framework. Letting $s_@$ denote the actual state, we have:
\begin{enumerate}
\item Expansion: $\psi\in K+\phi$ if and only if $s_@\models \lnot\mathbb B\lnot\phi\wedge\mathbb B(\phi\rightarrow\psi)$,
\item Contraction: $\psi\in K\div\phi$ if and only if $s_@\models \mathbb B\psi\wedge\mathbb B(\lnot\phi>\psi)$,
\item Revision: $\psi\in K*\phi$ if and only if $s_@\models\mathbb B(\phi>\psi)$.
\end{enumerate}
There are several issues that can be studied within this framework and are left for future work, for example,  whether the extended modal language can provide a way to generalize AGM-style belief change and whether  the proposed framework can accommodate iterated belief contraction/revision.
\appendix
\section{Appendix}
In this Appendix we prove Proposition 1. In order to make the proof entirely self-contained we include the proofs of known auxiliary results (e.g. the lemmas).\footnote{
Which can be found, for example, in \cite{Gae88,Hans99c}.
}
\begin{lemma}
\label{LEM:KfromModel}
Fix a model $M=\left\langle {S,s_@,\mathcal B,f,V} \right\rangle$ and let $K=\{\phi \in \Phi _{0}:\mathcal B(s_@) \subseteq \Vert \phi \Vert \}$. Then $K$ is deductively closed and consistent.
\end{lemma}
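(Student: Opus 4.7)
The plan is to verify the two properties separately, using only the truth-functional semantics of Definition \ref{Truth0} and the seriality of $\mathcal B$ from Definition \ref{DEF:frame}(3).

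For consistency: I want to exhibit a formula that does not belong to $K$, for which a contradiction $\chi = p \wedge \lnot p$ (where $p \in \texttt{At}$) is the natural candidate. A quick induction on Boolean formulas using Definition \ref{Truth0} shows that $\Vert \chi \Vert = \varnothing$ (and, more generally, that the truth set of a contradiction is $\varnothing$ while the truth set of a tautology is $S$; this is the one auxiliary fact the argument leans on). Since $\mathcal B$ is serial, $\mathcal B(s_@) \neq \varnothing$, so $\mathcal B(s_@) \not\subseteq \varnothing = \Vert \chi \Vert$ and hence $\chi \notin K$. It will follow that $Cn(K) \neq \Phi_0$, i.e., $K$ is consistent.

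For deductive closure: the inclusion $K \subseteq Cn(K)$ is automatic, so the work is to show $Cn(K) \subseteq K$. Fix $\psi \in Cn(K)$, so there exist $\phi_1, \ldots, \phi_n \in K$ (with $n \geq 0$) such that $(\phi_1 \wedge \cdots \wedge \phi_n) \rightarrow \psi$ is a tautology. I would proceed in three short steps: (i) from $\phi_i \in K$ for each $i$, conclude $\mathcal B(s_@) \subseteq \bigcap_{i=1}^n \Vert \phi_i \Vert = \Vert \phi_1 \wedge \cdots \wedge \phi_n \Vert$, using a routine induction on Boolean connectives to translate conjunction into intersection of truth sets; (ii) from the fact that the implication is a tautology, use the preliminary observation above (tautologies have truth set $S$) together with the semantics of $\rightarrow$ to conclude $\Vert \phi_1 \wedge \cdots \wedge \phi_n \Vert \subseteq \Vert \psi \Vert$; (iii) chain the two inclusions to obtain $\mathcal B(s_@) \subseteq \Vert \psi \Vert$, i.e., $\psi \in K$. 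The boundary case $n = 0$ (where $\psi$ itself is a tautology) is absorbed directly via step (ii), since then $\Vert \psi \Vert = S \supseteq \mathcal B(s_@)$.

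There is no genuine obstacle here; the only subtle point is the bookkeeping lemma that, under the truth clauses of Definition \ref{Truth0}, propositional tautologies have truth set $S$ and contradictions have truth set $\varnothing$ in every model. I would state this explicitly at the start of the proof (or dispose of it in a single sentence) so that both the consistency argument and step (ii) of the closure argument can cite it without interruption.
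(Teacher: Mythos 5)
Your proposal is correct and follows essentially the same route as the paper's proof: the identical chain of inclusions $\mathcal B(s_@)\subseteq\Vert\phi_1\wedge\cdots\wedge\phi_n\Vert\subseteq\Vert\psi\Vert$ for closure, and the same witness $p\wedge\lnot p$ together with seriality of $\mathcal B$ for consistency (the paper likewise relies on the already-established closure to pass from $p\wedge\lnot p\notin K$ to $p\wedge\lnot p\notin Cn(K)$). The only difference is that you make explicit the bookkeeping fact that tautologies have truth set $S$ and contradictions truth set $\varnothing$, which the paper uses silently.
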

\begin{proof}
First we show that $K$ is deductively closed, that is, $K=Cn(K)$. If $\psi \in K$ then $\psi \in Cn(K)$, because $\psi \rightarrow \psi $ is a tautology; thus $K\subseteq Cn(K)$. To show that $Cn(K)\subseteq K$, let $\psi \in Cn(K)$, that is, there exist $\phi _{1},...,\phi _{n}\in K$ ($n\geq 0$) such that $( \phi _{1}\wedge ...\wedge \phi _{n}) \rightarrow \psi $ is a tautology. Since $\Vert \phi _{1}\wedge ...\wedge \phi _{n}\Vert =\Vert \phi _{1}\Vert \cap ...\cap \Vert \phi _{n}\Vert $ and,  for all $i=1,...,n$, $\phi _{i}\in K$ (that is, $\mathcal B(s_@)\subseteq \Vert \phi _{i}\Vert $), it follows that $\mathcal B(s_@) \subseteq \Vert \phi _{1}\wedge ...\wedge \phi _{n}\Vert $. Since $( \phi _{1}\wedge ...\wedge \phi _{n}) \rightarrow \psi $ is a tautology, $\Vert ( \phi _{1}\wedge ...\wedge \phi _{n}) \rightarrow \psi \Vert =S $, that is, $\Vert \phi _{1}\wedge ...\wedge \phi _{n}\Vert \subseteq \Vert \psi \Vert .$ Thus $\mathcal B(s_@)\subseteq \Vert \psi \Vert $, that is, $\psi \in K.$ Next we show that $Cn(K)\neq \Phi _{0}$, that is, $K$ is consistent. Let $p\in\texttt{At}$ be an atomic formula. Then $\Vert p\wedge \lnot p\Vert =\varnothing $. By seriality of $\mathcal B$, $\mathcal B(s_@)\neq\varnothing$ so that $\mathcal B(s_@)\nsubseteq \Vert p\wedge \lnot p\Vert $, that is, $(p\wedge \lnot p)\notin K$ and hence, since $K=Cn(K)$, $(p\wedge \lnot p)\notin Cn(K).$
\end{proof}
\noindent\textbf{Proof of Part (A) of Proposition 1}.\\
Fix a model $\left\langle {S,s_@,\mathcal B,f,V} \right\rangle$ and let  $K=\{\phi \in \Phi _{0}:\mathcal B(s_@)\subseteq \Vert\phi\Vert\} $ and, for every $\phi\in\Phi_0$ such that $\Vert\lnot\phi\Vert\ne\varnothing$, let $K\div \phi$ be defined as follows (\eqref{A1} below reproduces \eqref{EQ:K-} above):
\begin{equation}
\label{A1}
\tag{A1}
\begin{array}{*{20}{l}}
 \psi\in K\div \phi\text{ if and only if }&(1)\, \mathcal B(s_@)\subseteq\Vert\psi\Vert,\text{ (that is, }\psi\in K)\text{ and } \\
{}&(2)\, \forall s\in\mathcal B(s_@), f\left(s,\Vert\lnot\phi\Vert\right)\subseteq\Vert\psi\Vert.
\end{array}
\end{equation}
Let `$\div' $' be following extension to $\Phi_0$ of the operator `$\div $' defined in \eqref{A1}:
\begin{equation}
\label{A2}
\tag{A2}
K\div' \phi=\left\{ {\begin{array}{*{20}{l}}
K\div \phi&\text{if }\Vert\lnot\phi\Vert\ne\varnothing \\
K\cap Cn(\lnot\phi)&\text{if }\Vert\lnot\phi\Vert=\varnothing.
\end{array}} \right.
\end{equation}
We want to show that the contraction operator defined in \eqref{A2} satisfies the AGM axioms.
\paragraph{($K-$1)} We need to show that, for every $\phi\in\Phi_0$, $K\div' \phi=Cn\left(K\div' \phi\right)$. If $\Vert\lnot\phi\Vert=\varnothing$ then this is true by construction, since $K$ is deductively closed and the intersection of deductively closed sets is deductively closed. Assume, therefore, that $\Vert\lnot\phi\Vert\ne\varnothing$, so that $K\div' \phi= K\div \phi$. Note first that, by \eqref{A1}, letting
\begin{equation}
\tag{A3}\label{A3}
\Psi_{\lnot\phi}=\left\{\psi\in\Phi_0 : f(s,\Vert\lnot\phi\Vert)\subseteq\Vert\psi\Vert,\forall s\in\mathcal B(s_@)\right\},
\end{equation}
$K\div \phi= K \cap \Psi_{\lnot\phi}$. Since the intersection of two deductively closed sets is deductively closed and $K$ is deductively closed, it suffices to show that $\Psi_{\lnot\phi}$ is deductively closed, that is, $\Psi_{\lnot\phi}=Cn(\Psi_{\lnot\phi})$. The inclusion $\Psi_{\lnot\phi}\subseteq Cn(\Psi_{\lnot\phi})$ follows from the fact that, for every $\chi\in \Psi_{\lnot\phi}$, $\chi\rightarrow\chi$ is a tautology. Next we show that $Cn(\Psi_{\lnot\phi})\subseteq \Psi_{\lnot\phi}$.  Since $\Vert \lnot\phi\Vert \ne \varnothing$,  $f(s,\Vert\lnot\phi\Vert)$ is defined for every $s\in \mathcal B(s_@)$. Fix an arbitrary $\psi \in Cn(\Psi_{\lnot\phi})$; then there exist $\phi _{1},...,\phi _{n}\in \Psi_{\lnot\phi}$ ($n\geq 0$) such that $( \phi _{1}\wedge ...\wedge \phi _{n}) \rightarrow \psi $ is a tautology, so that $\Vert( \phi _{1}\wedge ...\wedge \phi _{n}) \rightarrow \psi  \Vert = S$, that is, $\Vert\phi _{1}\wedge ...\wedge \phi _{n}\Vert\subseteq\Vert\psi\Vert$. Fix an arbitrary $s\in\mathcal B(s_@)$ and an arbitrary $i=1,...,n$. Then, since $\phi _{i}\in \Psi_{\lnot\phi}$, $ f(s,\Vert\lnot\phi\Vert)\subseteq\Vert\phi_i\Vert$. Hence $f(s,\Vert\lnot\phi\Vert)\subseteq\Vert\phi _{1}\wedge ...\wedge \phi _{n}\Vert$. Since $\Vert(\phi _{1}\wedge ...\wedge \phi _{n})\Vert\subseteq\Vert\psi\Vert$ it follows that $f(s,\Vert\lnot\phi\Vert)\subseteq\Vert\psi\Vert$, that is, $\psi\in\Psi_{\lnot\phi}$.
\paragraph{($K-$2)}  We need to show that $K\div' \phi\subseteq K$. If $\Vert\lnot\phi\Vert=\varnothing$ then $K\div' \phi=K\cap Cn(\lnot\phi)\subseteq K$. If $\Vert\lnot\phi\Vert\ne\varnothing$ then $K\div' \phi=K\div \phi= K \cap \Psi_{\lnot\phi}\subseteq K$.
\paragraph{($K-$3)} We need to show that if $\phi\notin K$ then $K\subseteq K\div' \phi$. Assume that $\phi\notin K$, that is, $\mathcal B(s_@)\cap \Vert\lnot\phi\Vert\ne\varnothing$. Then $\Vert\lnot\phi\Vert\ne\varnothing$ and thus $K\div' \phi=K\div \phi$. Fix an arbitrary $\psi\in K$, that is, $\mathcal B(s_@)\subseteq\Vert\psi\Vert$. We need to show that, $\forall s\in\mathcal B(s_@)$, $f(s,\Vert\lnot\phi\Vert)\subseteq\Vert\psi\Vert$. Since $\mathcal B(s_@)\cap \Vert\lnot\phi\Vert\ne\varnothing$, by 4(c) of Definition 2, for every $s\in\mathcal B(s_@)$, $f(s,\Vert\lnot\phi\Vert)\subseteq\mathcal B(s_@)\cap \Vert\lnot\phi\Vert$ and thus, since $\mathcal B(s_@)\subseteq\Vert\psi\Vert$, $f(s,\Vert\lnot\phi\Vert)\subseteq\Vert\psi\Vert$.
\paragraph{($K-$4)} We need to show that if $\phi$ is not a tautology then $\phi\notin K\div' \phi$. Suppose that $\phi$ is not a tautology, so that $\phi\notin Cn(\lnot\phi)$. If $\Vert\lnot\phi\Vert=\varnothing$ then $K\div' \phi=K\cap Cn(\lnot\phi)$ and thus $\phi\notin K\div' \phi$. Next, suppose that $\Vert\lnot\phi\Vert\ne\varnothing$ so that $K\div' \phi=K\div \phi$. Since $K\div \phi= K \cap \Psi_{\lnot\phi}$ (where $\Psi_{\lnot\phi}$ is given by \eqref{A3}) it is sufficient to show that $\phi\notin\Psi_{\lnot\phi}$, that is, $f(s,\Vert\lnot\phi\Vert)\not\subseteq\Vert\phi\Vert$, for some $s\in\mathcal B(s_@)$. This follows from the fact that, by 4(a) of Definition 2, for every $s\in\mathcal B(s_@)$, $f(s,\Vert\lnot\phi\Vert)\subseteq\Vert\lnot\phi\Vert$.
\paragraph{($K-$5)} We need to show that if $\phi\in K$ then $K\subseteq (K\div' \phi)+\phi=Cn(K\div' \phi\cup\{\phi\})$. Assume that $\phi\in K$ and fix an arbitrary $\psi\in K$. Then $(\phi\rightarrow\psi)\in K$. If $\Vert\lnot\phi\Vert=\varnothing$ then $K\div' \phi = K\cap Cn(\lnot\phi)$. Since $\lnot\phi\in Cn(\lnot\phi)$, $\phi\rightarrow\psi\in Cn(\lnot\phi)$ and thus $\phi\rightarrow\psi\in K\div' \phi$, from which it follows (since, by ($K-1$), $K\div' \phi$ is deductively closed) that $\psi\in Cn(K\div' \phi\cup\{\phi\})$. Suppose now that $\Vert\lnot\phi\Vert\ne\varnothing$ so that $K\div' \phi=K\div \phi=K \cap \Psi_{\lnot\phi}$ (where $\Psi_{\lnot\phi}$ is given by \eqref{A3}). By 4(a) of Definition 2, for every $s\in\mathcal B(s_@)$, $f(s,\Vert\lnot\phi\Vert)\subseteq\Vert\lnot\phi\Vert$ and thus $f(s,\Vert\lnot\phi\Vert)\subseteq\Vert\phi\rightarrow\psi\Vert=\Vert\lnot\phi\Vert\cup\Vert\psi\Vert$. Hence (recall that $(\phi\rightarrow\psi)\in K$) $(\phi\rightarrow\psi)\in K\div \phi$ so that $\psi\in Cn(K\div \phi\cup\{\phi\})$.
\paragraph{($K-$6)} We need to show that if $\phi\leftrightarrow\psi$ is a tautology then $K\div' \phi=K\div' \psi$. Assume that $\phi\leftrightarrow\psi$ is a tautology. Then $Cn(\lnot\phi)=Cn(\lnot\psi)$ and $\Vert\lnot\phi\Vert=\Vert\lnot\psi\Vert$. Thus $\Vert\lnot\phi\Vert=\varnothing$ if and only if $\Vert\lnot\psi\Vert=\varnothing$, in which case $K\div' \phi=K\cap Cn(\lnot\phi)=K\cap Cn(\lnot\psi)=K\div' \psi$. Furthermore, $\Vert\lnot\phi\Vert\ne\varnothing$ if and only if $\Vert\lnot\psi\Vert\ne\varnothing$,  in which case $\{\chi\in\Phi_0 : f(s,\Vert\lnot\phi\Vert)\subseteq\Vert\chi\Vert,\forall s\in\mathcal B(s_@)\}=\{\chi\in\Phi_0 : f(s,\Vert\lnot\psi\Vert)\subseteq\Vert\chi\Vert,\forall s\in\mathcal B(s_@)\}$, from which it follows that $K\div \phi=K\div \psi$.
\paragraph{($K-$7)} We have to show that $(K\div' \phi)\cap(K\div' \psi)\subseteq K\div' (\phi\wedge\psi)$. We need to consider several cases.\\[3pt]
Case 1: $\Vert\lnot\phi\Vert=\Vert\lnot\psi\Vert=\varnothing$ so that $\Vert\lnot\phi\Vert\cup\Vert\lnot\psi\Vert=\Vert\lnot\phi\vee\lnot\psi\Vert=\Vert\lnot(\phi\wedge\psi)\Vert=\varnothing$. In this case $K\div' \phi=K\cap Cn(\lnot\phi)$, $K\div' \psi=K\cap Cn(\lnot\psi)$ and $K\div' (\phi\wedge\psi)=K\cap Cn(\lnot(\phi\wedge\psi))$. Since $Cn(\lnot\phi)\cap Cn(\lnot\psi)\subseteq Cn(\lnot\phi\vee\lnot\psi)=Cn(\lnot(\phi\wedge\psi))$ it follows that $(K\div' \phi)\cap(K\div' \psi)\subseteq K\div' (\phi\wedge\psi)$.\\[3pt]
Case 2: $\Vert\lnot\phi\Vert=\varnothing$ and $\Vert\lnot\psi\Vert\ne \varnothing$, so that $\Vert\lnot(\phi\wedge\psi)\Vert=\Vert\lnot\phi\vee\lnot\psi\Vert=\Vert\lnot\phi\Vert\cup\Vert\lnot\psi\Vert=\Vert\lnot\psi\Vert\ne\varnothing$.
In this case $K\div' \phi=K\cap Cn(\lnot\phi)$, $K\div' \psi=K\div \psi=K\cap \{\chi\in\Phi_0 : f(s,\Vert\lnot\psi\Vert)\subseteq\Vert\chi\Vert,\forall s\in\mathcal B(s_@)\}$ and $K\div' (\phi\wedge\psi)=K\div (\phi\wedge\psi)=K\cap \{\chi\in\Phi_0 : f(s,\Vert\lnot(\phi\wedge\psi)\Vert)\subseteq\Vert\chi\Vert,\forall s\in\mathcal B(s_@)\}$. Since $\Vert\lnot(\phi\wedge\psi)\Vert=\Vert\lnot\psi\Vert$, $f(s,\Vert\lnot(\phi\wedge\psi)\Vert)= f(s,\Vert\lnot\psi\Vert)$ and thus $K\div (\phi\wedge\psi)=K\div \psi$. Hence the inclusion $(K\div' \phi)\cap(K\div \psi)\subseteq K\div (\phi\wedge\psi)$ reduces to $(K\div' \phi)\cap(K\div \psi)\subseteq K\div\psi$, which is trivially true.\\[3pt]
Case 3: $\Vert\lnot\phi\Vert\ne\varnothing$ and $\Vert\lnot\psi\Vert= \varnothing$, so that $\Vert\lnot\phi\vee\lnot\psi\Vert=\Vert\lnot\phi\Vert\cup\Vert\lnot\psi\Vert=\Vert\lnot\phi\Vert\ne\varnothing$.
In this case, by an argument similar to the one used in Case 2, $K\div' (\phi\wedge\psi)=K\div (\phi\wedge\psi)=K\div\phi=K\div' \phi$, so that the inclusion $(K\div' \phi)\cap(K\div' \psi)\subseteq K\div' (\phi\wedge\psi)$ reduces to $(K\div\phi)\cap(K\div'\psi)\subseteq K\div\phi$, which is trivially true.\\[3pt]
Case 4: $\Vert\lnot\phi\Vert\ne\varnothing$ and $\Vert\lnot\psi\Vert\ne\varnothing$, so that $\Vert\lnot(\phi\wedge\psi)\Vert=\Vert\lnot\phi\vee\lnot\psi\Vert=\Vert\lnot\phi\Vert\cup\Vert\lnot\psi\Vert\ne\varnothing$. In this case $K\div' \phi=K\div \phi=K\cap\{\chi\in\Phi_0 : f(s,\Vert\lnot\phi\Vert)\subseteq\Vert\chi\Vert,\forall s\in\mathcal B(s_@)\}$, $K\div' \psi=K\div \psi=K\cap\{\chi\in\Phi_0 : f(s,\Vert\lnot\psi\Vert)\subseteq\Vert\chi\Vert,\forall s\in\mathcal B(s_@)\}$ and $K\div' (\phi\wedge\psi)=K\div (\phi\wedge\psi)=K\cap\{\chi\in\Phi_0 : f(s,\Vert\lnot(\phi\wedge\psi)\Vert)\subseteq\Vert\chi\Vert,\forall s\in\mathcal B(s_@)\}$. Fix an arbitrary $\chi\in(K\div \phi)\cap(K\div \psi)$ (thus, in particular, $\chi\in K$). We need to show that $\chi\in K\div (\phi\wedge\psi)$, that is, that, $\forall s\in\mathcal B(s_@)$, $f(s,\Vert\lnot(\phi\wedge\psi)\Vert)\subseteq\Vert\chi\Vert$. Since $\chi\in (K\div \phi)\cap(K\div \psi)$,
\begin{equation}
\label{A4}
\tag{A4}
f(s,\Vert\lnot\phi\Vert)\subseteq\Vert\chi\Vert\text{ and } f(s,\Vert\lnot\psi\Vert)\subseteq\Vert\chi\Vert.
\end{equation}
By Property 4(a) of Definition 2, $f(s,\Vert\lnot(\phi\wedge\psi)\Vert)\subseteq\Vert\lnot(\phi\wedge\psi)\Vert=\Vert\lnot\phi\Vert\cup\Vert\lnot\psi\Vert$. It follows from this that
\begin{equation}
\label{A5}
\tag{A5}
f(s,\Vert\lnot(\phi\wedge\psi)\Vert)\,=\,\left(f(s,\Vert\lnot(\phi\wedge\psi)\Vert)\cap\Vert\lnot\phi\Vert\right)\,{\mathlarger {\mathlarger {\cup}}}\, \left(f(s,\Vert\lnot(\phi\wedge\psi)\Vert)\cap\Vert\lnot\psi\Vert\right).
\end{equation}
By Property 4(d) of Definition 2 (with $E=\Vert\lnot\phi\Vert\cup\Vert\lnot\psi\Vert=\Vert\lnot(\phi\wedge\psi)\Vert$ and $F=\Vert\lnot\phi\Vert$)
\begin{equation}
\label{A6}
\tag{A6}
f(s,\Vert\lnot(\phi\wedge\psi)\Vert)\cap\Vert\lnot\phi\Vert\,\,{\mathlarger{\subseteq}}\,\,f(s,\Vert\lnot\phi\Vert).
\end{equation}
A second application of Property 4(d) of Definition 2 (with $E=\Vert\lnot\phi\Vert\cup\Vert\lnot\psi\Vert=\Vert\lnot(\phi\wedge\psi)\Vert$ and, this time, with  $F=\Vert\lnot\psi\Vert$) gives
\begin{equation}
\label{A7}
\tag{A7}
f(s,\Vert\lnot(\phi\wedge\psi)\Vert)\cap\Vert\lnot\psi\Vert\,\,{\mathlarger{\subseteq}}\,\,f(s,\Vert\lnot\psi\Vert).
\end{equation}
It follows from \eqref{A5},  \eqref{A6},  \eqref{A7} that  $f(s,\Vert\lnot(\phi\wedge\psi)\Vert\subseteq (f(s,\Vert\lnot\phi\Vert)\cup f(s,\Vert\lnot\psi\Vert))$ and thus, by \eqref{A4}, $f(s,\Vert\lnot(\phi\wedge\psi)\Vert)\subseteq\Vert\chi\Vert$.
\paragraph{($K-$8)} We need to show that if $\phi\notin K\div' (\phi\wedge\psi)$ then $K\div' (\phi\wedge\psi)\subseteq K\div' \phi$. Assume that $\phi\notin K\div' (\phi\wedge\psi)$. \\
Suppose first that $\Vert\lnot\phi\Vert=\varnothing$, that is,  $\Vert\phi\Vert=S$. Then $\mathcal B(s_@)\subseteq\Vert\phi\Vert$ and thus $\phi\in K$. If $\Vert\lnot(\phi\wedge\psi)\Vert=\Vert\lnot\phi\Vert\cup\Vert\lnot\psi\Vert\ne\varnothing$ then $K\div' (\phi\wedge\psi)=K\div (\phi\wedge\psi)=K\cap\{\chi\in\Phi_0 : f(s,\Vert\lnot\phi\Vert\cup\Vert\lnot\psi\Vert)\subseteq\Vert\chi\Vert,\forall s\in\mathcal B(s_@)\}$ and, since $\Vert\phi\Vert=S$, for all $s\in\mathcal B(s_@)$ we have that $f(s,\Vert\lnot\phi\Vert\cup\Vert\lnot\psi\Vert)\subseteq\Vert\phi\Vert$, implying that $\phi\in K\div (\phi\wedge\psi)$, contradicting our assumption. Thus the case where $\Vert\lnot\phi\Vert=\varnothing$ and $\Vert\lnot\phi\Vert\cup\Vert\lnot\psi\Vert\ne\varnothing$ is ruled out and we are left with only two cases to consider.\\[3pt]
Case 1: $\Vert\lnot\phi\Vert\cup\Vert\lnot\psi\Vert=\varnothing$ so that $\Vert\lnot\phi\Vert=\varnothing$. In this case $K\div' (\phi\wedge\psi)=K\cap Cn(\lnot(\phi\wedge\psi))$ and $K\div' \phi=K\cap Cn(\lnot\phi)$. Fix an arbitrary $\chi\in K\div' (\phi\wedge\psi)$. Then $\chi\in K$ and $\chi\in Cn(\lnot(\phi\wedge\psi))$. We need to show that $\chi\in K\div' \phi$, that is, that $\chi\in Cn(\lnot\phi)$. Since $\chi\in Cn(\lnot(\phi\wedge\psi))$, $\lnot(\phi\wedge\psi)\rightarrow\chi$ is a tautology. Thus, since $\lnot\phi\rightarrow\lnot(\phi\wedge\psi)$ is also a tautology, $\lnot\phi\rightarrow\chi$ is a tautology and thus $\chi\in Cn(\lnot\phi)$.\\[3pt]
Case 2:
$\Vert\lnot\phi\Vert\ne\varnothing$ and thus $\Vert\lnot(\phi\wedge\psi)\Vert=\Vert\lnot\phi\Vert\cup\Vert\lnot\psi\Vert\ne\varnothing$. Then $K\div' (\phi\wedge\psi)=K\div (\phi\wedge\psi)=K\cap\{\chi\in\Phi_0 : f(s,\Vert\lnot\phi\Vert\cup\Vert\lnot\psi\Vert)\subseteq\Vert\chi\Vert,\forall s\in\mathcal B(s_@)\}$ and $K\div' \phi=K\div \phi=K\cap\{\chi\in\Phi_0 : f(s,\Vert\lnot\phi\Vert)\subseteq\Vert\chi\Vert,\forall s\in\mathcal B(s_@)\}$. Recall the assumption that $\phi\notin K\div (\phi\wedge\psi)$. Then two sub-cases are possible.\\
Case 2.1: $\phi\notin K$, that is, $\mathcal B(s_@)\cap\Vert\lnot\phi\Vert\ne\varnothing$. Then, by 4(c) of Definition 2,
\begin{equation}
\label{A8}
\tag{A8}
\forall s\in\mathcal B(s_@),\, f(s,\Vert\lnot\phi\Vert)\subseteq \mathcal B(s_@)\cap\Vert\lnot\phi\Vert\subseteq\mathcal B(s_@).
\end{equation}
Fix an arbitrary $\chi\in K\div (\phi\wedge\psi)$. Then $\chi\in K$, that is, $\mathcal B(s_@)\subseteq \Vert\chi\Vert$ and thus, by \eqref{A8}, $\forall s\in\mathcal B(s_@)$,  $f(s,\Vert\lnot\phi\Vert)\subseteq \Vert\chi\Vert$ so that $\chi\in K\div \phi$.\\
Case 2.2: $\phi\in K$ and $B_{\lnot\phi\lnot\psi}\ne\varnothing$, where $B_{\lnot\phi\lnot\psi}=\{s\in\mathcal B(s_@): f(s,\Vert\lnot\phi\Vert\cup\Vert\lnot\psi\Vert)\cap\Vert\lnot\phi\Vert\ne\varnothing\}$.\footnote{
Note that the case where $\phi\in K$ and  $B_{\lnot\phi\lnot\psi}=\varnothing$ is ruled out by our initial assumption that $\phi\notin  K\div (\phi\wedge\psi)$. In fact, $B_{\lnot\phi\lnot\psi}=\varnothing$ means that, $\forall s\in\mathcal B(s_@), f(s,\Vert\lnot\phi\Vert\cup\Vert\lnot\psi\Vert)\cap\Vert\lnot\phi\Vert=\varnothing$, that is, $f(s,\Vert\lnot\phi\Vert\cup\Vert\lnot\psi\Vert)\subseteq\Vert\phi\Vert$, which, in conjunction with the hypothesis that $\phi\in K$, yields $\phi\in K\div (\phi\wedge\psi)$.  
}\
 Then, by 4(e.1) of Definition 2 (with $E=\Vert\lnot\phi\Vert\cup\Vert\lnot\psi\Vert$ and $F=\Vert\lnot\phi\Vert$)
\begin{equation}
\label{A9}
\tag{A9}
\forall s\in\mathcal B_{\lnot\phi\lnot\psi}, f(s,\Vert\lnot\phi\Vert)\subseteq f(s,\Vert\lnot\phi\Vert\cup\Vert\lnot\psi\Vert)\,\cap\,\Vert\lnot\phi\Vert
\end{equation}
and, by 4(e.2) of Definition 2 (again, with $E=\Vert\lnot\phi\Vert\cup\Vert\lnot\psi\Vert$ and $F=\Vert\lnot\phi\Vert$),
\begin{equation}
\label{A10}
\tag{A10}
\forall s\in\mathcal B(s_@), f(s,\Vert\lnot\phi\Vert)\subseteq f(s',\Vert\lnot\phi\Vert)\text{ for some } s'\in B_{\lnot\phi\lnot\psi}.
\end{equation}
Fix an arbitrary $\chi\in K\div (\phi\wedge\psi)$. Then, $\chi\in K$ and (recall that $\Vert\lnot(\phi\wedge\psi)\Vert=\Vert\lnot\phi\Vert\cup\Vert\lnot\psi\Vert$)   $f(s,\Vert\lnot\phi\Vert\cup\Vert\lnot\psi\Vert)\subseteq\Vert\chi\Vert$, $\forall s\in \mathcal B(s_@)$; it follows from this, \eqref{A9} and \eqref{A10} that, $\forall s\in \mathcal B(s_@)$, $f(s,\Vert\lnot\phi\Vert)\subseteq\Vert\chi\Vert$. Thus $\chi\in K\div \phi$.
\bigskip\par
Before we proceed to the proof of Part (B) of Proposition 1, we establish the following lemma.
\begin{lemma}
\label{LEM:lem1}
Let $A\subseteq\Phi_0$ be such that $A=Cn(A)$. Then, $\forall \alpha\in\Phi_0$, $\Vert Cn\left(A\cup\{\alpha\}\right)\Vert=\Vert A\Vert\cap\Vert\alpha\Vert.$
\end{lemma}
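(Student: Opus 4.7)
The plan is to unpack $\Vert A\Vert$ in the natural way as $\bigcap_{\chi\in A}\Vert\chi\Vert$ (i.e., the set of states $s$ with $s\models\chi$ for every $\chi\in A$) and then prove the two inclusions separately, exploiting the machinery that was already used in the proof of Lemma \ref{LEM:KfromModel}.

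For $\Vert Cn(A\cup\{\alpha\})\Vert\subseteq \Vert A\Vert\cap\Vert\alpha\Vert$, I would pick $s\in\Vert Cn(A\cup\{\alpha\})\Vert$ and note that, for each $\chi\in A$, $\chi\in Cn(A\cup\{\alpha\})$ (via the tautology $\chi\to\chi$), and likewise $\alpha\in Cn(A\cup\{\alpha\})$; hence $s\models\chi$ for every $\chi\in A$ and $s\models\alpha$, giving $s\in\Vert A\Vert\cap\Vert\alpha\Vert$. Here the hypothesis $A=Cn(A)$ is not even needed, but it will play a role in packaging $\Vert A\Vert$ intersectionally.

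For the reverse inclusion, I would fix $s\in\Vert A\Vert\cap\Vert\alpha\Vert$ and an arbitrary $\psi\in Cn(A\cup\{\alpha\})$. By definition of $Cn$, there exist $\phi_1,\dots,\phi_n\in A\cup\{\alpha\}$ such that $(\phi_1\wedge\cdots\wedge\phi_n)\to\psi$ is a tautology. Each $\phi_i$ is either in $A$ (so $s\models\phi_i$ since $s\in\Vert A\Vert$) or equal to $\alpha$ (so $s\models\phi_i$ since $s\in\Vert\alpha\Vert$); therefore $s\models\phi_1\wedge\cdots\wedge\phi_n$. Since the implication to $\psi$ is a tautology, $\Vert\phi_1\wedge\cdots\wedge\phi_n\Vert\subseteq\Vert\psi\Vert$, and hence $s\models\psi$. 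As $\psi$ was arbitrary, $s\in\Vert Cn(A\cup\{\alpha\})\Vert$.

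There is no real obstacle: both directions are essentially the same tautology-and-closure argument already deployed inside the proof of Lemma \ref{LEM:KfromModel}. The only mild subtlety is being explicit about the convention $\Vert A\Vert=\bigcap_{\chi\in A}\Vert\chi\Vert$, which is what makes the statement mean what it is evidently meant to mean and also what makes $\Vert A\Vert=\Vert Cn(A)\Vert$ (so that the hypothesis $A=Cn(A)$ is used only to express $\Vert A\Vert$ cleanly, rather than being needed for the combinatorial core of the argument).
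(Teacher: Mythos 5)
Your proof is correct, but it takes a somewhat different route from the paper's. The paper reads $\Vert A\Vert$ in the canonical model as $\{s\in S:\chi\in s,\ \forall\chi\in A\}$ and makes the deduction theorem the pivot: since $A=Cn(A)$, $\beta\in Cn(A\cup\{\alpha\})$ if and only if $(\alpha\rightarrow\beta)\in A$, and the two inclusions then follow by modus ponens inside the maximally consistent set $s$ (for one direction) and by instantiating at $\beta\in A$ (for the other). You bypass the deduction theorem entirely and argue directly from the finite-premise definition of $Cn$ together with the fact that tautologies are true at every state --- exactly the device already used in the proof of Lemma \ref{LEM:KfromModel}. Your version is marginally more elementary and, as you observe, makes the hypothesis $A=Cn(A)$ superfluous once one adopts the convention $\Vert A\Vert=\bigcap_{\chi\in A}\Vert\chi\Vert$, since then $\Vert A\Vert=\Vert Cn(A)\Vert$ automatically; in the paper the hypothesis is what makes the deduction-theorem formulation clean. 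The one point worth stating explicitly is that the lemma is applied in the canonical model of Part (B), where the paper defines $\Vert A\Vert$ by membership in maximally consistent sets; your intersection convention agrees with that definition there because $s\models\chi$ if and only if $\chi\in s$, so no gap arises. (In the second inclusion the degenerate case $n=0$ just means $\psi$ is itself a tautology, hence true at $s$.)
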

\begin{proof}
Since $A$ is deductively closed,  $\forall \beta\in\Phi_0$,
\begin{equation}
\label{EQ:lem1}
\tag{A12}
\beta\in Cn\left(A\cup\{\alpha\}\right)\text{ if and only if } (\alpha\rightarrow\beta)\in A.
\end{equation}
First we show that $\Vert A\Vert\cap\Vert\alpha\Vert\subseteq\Vert Cn\left(A\cup\{\alpha\}\right)\Vert.$
Fix an arbitrary $s\in\Vert A\Vert\cap\Vert\alpha\Vert$; we need to show that  $s\in\Vert Cn\left(A\cup\{\alpha\}\right)\Vert$, that is, that $\forall\beta\in Cn\left(A\cup\{\alpha\}\right),\, \beta\in s$. Since $s\in\Vert\alpha\Vert$, $\alpha\in s$. Fix an arbitrary $\beta\in Cn\left(A\cup\{\alpha\}\right)$; then, by \eqref{EQ:lem1}, $(\alpha\rightarrow\beta)\in A$; thus, since $s\in\Vert A\Vert$, $(\alpha\rightarrow\beta)\in s$. Hence, since both $\alpha$ and $\alpha\rightarrow\beta$ belong to $s$ and $s$ is deductively closed, $\beta\in s$.\\
Next we show that $\Vert Cn\left(A\cup\{\alpha\}\right)\Vert\subseteq\Vert A\Vert\cap\Vert\alpha\Vert.$ Let $s\in \Vert Cn\left(A\cup\{\alpha\}\right)\Vert$. Then, since $\alpha\in Cn\left(A\cup\{\alpha\}\right)$, $\alpha\in s$, that is, $s\in\Vert\alpha\Vert$. It remains to show that $s\in\Vert A\Vert$, that is, that, for every $\beta\in A$, $\beta\in s$. Fix an arbitrary  $\beta\in A$; then, since $A$ is deductively closed, $(\alpha\rightarrow\beta)\in A$. Thus, by \eqref{EQ:lem1}, $\beta\in Cn\left(A\cup\{\alpha\}\right)$ and thus, since $s\in\Vert Cn\left(A\cup\{\alpha\}\right)\Vert$, $\beta\in s$.
\end{proof}
\noindent
\textbf{Proof of Part (B) of Proposition 1}.\\
We need to show that if $K\subset\Phi_0$ is consistent and deductively closed and $\div :\Phi_0\rightarrow 2^{\Phi_0}$ is an AGM belief contraction function based on $K$, then there exists a model $\left\langle {S,s_@,\mathcal B,f,V} \right\rangle$ such that $K=\{\phi\in\Phi_0:\mathcal B(s_@)\subseteq\Vert\phi\Vert\}$ and, for all $\phi,\psi\in\Phi_0$, $\psi\in K\div \phi$ if and only if \eqref{A1} is satisfied. Define the following model $\left\langle {S,s_@,\mathcal B,f,V} \right\rangle$:
\begin{enumerate}
  \item $S$ is the set of maximally consistent sets of formulas in $\Phi_0$.
  \item The valuation $V:\textit{At}\rightarrow S$ is defined by $V(p)=\{s\in S: p\in s\}$, so that, for every $\phi\in\Phi_0$, $\Vert\phi\Vert=\{s\in S: \phi\in s\}$.  If $\Psi\subseteq\Phi_0$, define $\Vert\Psi\Vert=\{s\in S: \forall \phi\in \Psi, \phi\in s\}$.
  \item Choose an arbitrary $s_@\in S$ and define $\mathcal B(s_@)=\Vert K\Vert$.
 \item Let $\mathcal E = \{E\subseteq S:\varnothing\ne E=\Vert\phi\Vert \text{ for some  } \phi\in\Phi_0\}$. Define $f:\mathcal B(s_@)\times \mathcal E\rightarrow 2^S$  as follows:
\begin{equation}
\label{A11}
\tag{A11}
      \forall s\in\mathcal B(s_@),\,\, f(s,\Vert\phi\Vert) =\Vert K\div \lnot\phi\Vert\,\cap\,\Vert\phi\Vert .
\end{equation}
\end{enumerate}
\begin{remark}
\label{tautology}
If $\phi$ is a tautology then $\lnot\phi$ is a contradiction and thus (since, by hypothesis, $K$ is consistent) $\lnot\phi\notin K$. It follows from ($K-2$) and ($K-3$) that $K\div \lnot\phi=K$. Furthermore, since $\phi$ is a tautology and $K$ is deductively closed, $\phi\in K$, that is $\Vert K\Vert\subseteq\Vert\phi\Vert$ so that $\Vert K\Vert\cap \Vert \phi\Vert=\Vert K\Vert$. Hence, by \eqref{A11},  $\forall s\in\mathcal B(s_@),\,f(s,\Vert\phi\Vert) =\Vert K\Vert$. On the other hand, if $\lnot\phi$ is a tautology then $\Vert\phi\Vert=\varnothing$ and thus $\Vert\phi\Vert\notin\mathcal E$, that is, $\Vert\phi\Vert$ is not in the domain of $f$.
\end{remark}
\par
First we show that the selection function defined in \eqref{A11} satisfies Properties 4(a)-4(e) of Definition 2. In view of Remark \ref{tautology}, we can restrict attention to contingent formulas, that is, to formulas $\phi$ such that neither $\phi$ nor $\lnot\phi$ is a tautology. Denote by $\Phi_{cont}\subseteq\Phi_0$ the set of contingent formulas.\\
Recall that $S$ is the set of maximally consistent sets of formulas in $\Phi_0$ and, for $A\subseteq\Phi_0$, $\Vert A\Vert=\{s\in S: \chi\in s,\,\forall\chi\in A\}$.

\paragraph{Property 4(a)} We need to show that if $\phi\in\Phi_{cont}$ then $\Vert K\div \lnot\phi\Vert\,\cap\,\Vert\phi\Vert\subseteq\Vert\phi\Vert$, which is obviously true, and $\Vert K\div \lnot\phi\Vert\,\cap\,\Vert\phi\Vert\ne\varnothing$. Since $\phi\in\Phi_{cont}$, $\Vert\phi\Vert\ne\varnothing$ and, by ($K-4$), $\lnot\phi\notin K\div \lnot\phi$. By ($K-1$) $K\div \lnot\phi=Cn(K\div \lnot\phi)$ and thus $\lnot\phi\notin Cn(K\div \lnot\phi)$, that is, $K\div \lnot\phi$ is consistent and hence $\Vert K\div \lnot\phi\Vert\ne\varnothing$.
\paragraph{Property 4(b)}  Fix an arbitrary $s\in\mathcal B(s_@)$ and an arbitrary $\phi\in\Phi_{cont}$. We need to show that if $s\in\Vert\phi\Vert$ then $s\in f(s,\Vert\phi\Vert)=\Vert K\div\lnot\phi\Vert\cap\Vert\phi\Vert$. By construction, $\mathcal B(s_@)=\Vert K\Vert$; thus, $s\in\Vert K\Vert$. By  ($K-2$), $K\div\lnot\phi\subseteq K$ so that $\Vert K\Vert\subseteq\Vert K\div\lnot\phi\Vert$. Hence $s\in\Vert K\div\lnot\phi\Vert$. Thus if $s\in\Vert\phi\Vert$ then $s\in\Vert K\div\lnot\phi\Vert\cap\Vert\phi\Vert$.
\paragraph{Property 4(c)} We need to show that if $\mathcal B(s_@)\cap\Vert\phi\Vert\ne\varnothing$ then (since $\mathcal B(s_@)=\Vert K\Vert$ and, $\forall s\in\mathcal B(s_@)$, $f(s,\Vert\phi\Vert) =\Vert K\div \lnot\phi\Vert\,\cap\,\Vert\phi\Vert$)\, $\Vert K\div \lnot\phi\Vert\,\cap\,\Vert\phi\Vert\subseteq\Vert K\Vert\cap\Vert\phi\Vert$. If $\Vert K\Vert\cap\Vert\phi\Vert\ne\varnothing$ then $\lnot\phi\notin K$ and thus, by ($K-3$), $K\subseteq K\div\lnot\phi$, so that $\Vert K\div\lnot\phi\Vert\subseteq \Vert K\Vert$ and thus $\Vert K\div \lnot\phi\Vert\,\cap\,\Vert\phi\Vert \subseteq\Vert K\Vert\cap\Vert\phi\Vert$.
\paragraph{Property 4(d)} We need to show that if $\phi\in\Phi_{cont}$ and $\psi\in\Phi_0$, then $\forall s\in\mathcal B(s_@)$, $f(s,\Vert\phi\Vert)\,\cap\,\Vert\psi\Vert\subseteq f(s,\Vert\phi\Vert\cap\Vert\psi\Vert)$, that is, using \eqref{A11} and the fact that $\Vert\phi\Vert\cap\Vert\psi\Vert=\Vert\phi\wedge\psi\Vert$,
\begin{equation}
\label{A13}
\tag{A13}
      \Vert K\div \lnot\phi\Vert\,\cap\,\Vert\phi\Vert\,\cap\,\Vert\psi\Vert\,\subseteq\, \Vert K\div\lnot(\phi\wedge\psi)\Vert\,\cap\,\Vert\phi\wedge\psi\Vert
\end{equation}
By ($K-7$), $\forall \alpha,\beta\in\Phi_0,\,(K\div\alpha)\cap(K\div\beta)\subseteq K\div(\alpha\wedge\beta)$. Thus applying  ($K-7$) to $\alpha=\lnot(\phi\wedge\psi)$ and $\beta=\phi\rightarrow\psi$ we get
\begin{equation}
\label{A14}
\tag{A14}
K\div\lnot(\phi\wedge\psi)\,\cap\,K\div(\phi\rightarrow\psi)\,\subseteq\,K\div\left(\lnot(\phi\wedge\psi)\wedge(\phi\rightarrow\psi)\right)
\end{equation}
\noindent
Since $\lnot(\phi\wedge\psi)\wedge(\phi\rightarrow\psi)$ is logically equivalent to $\lnot\phi$, by ($K-6$) $K\div\left(\lnot(\phi\wedge\psi)\wedge(\phi\rightarrow\psi)\right)=K\div\lnot\phi$. Thus, by \eqref{A14}
\begin{equation}
\label{A15}
\tag{A15}
K\div\lnot(\phi\wedge\psi)\,\cap\,K\div(\phi\rightarrow\psi)\,\subseteq\,K\div\lnot\phi.
\end{equation}
Next we show that
\begin{equation}
\label{A16}
\tag{A16}
Cn\left(K\div\lnot(\phi\wedge\psi)\,\cup\,\{\phi\wedge\psi\}\right)\,\subseteq\,Cn\left(K\div\lnot\phi\,\cup\,\{\phi\wedge\psi\}\right).
\end{equation}
Fix an arbitrary $\chi\in Cn\left(K\div\lnot(\phi\wedge\psi)\,\cup\,\{\phi\wedge\psi\}\right)$. Then, since, by ($K-1$), $K\div\lnot(\phi\wedge\psi)$ is deductively closed,
\begin{equation}
\label{A17}
\tag{A17}
\left((\phi\wedge\psi)\rightarrow\chi\right)\in K\div\lnot(\phi\wedge\psi).
\end{equation}
\noindent
By ($K-2$), $K\div\lnot(\phi\wedge\psi)\subseteq K$ and thus, by \eqref{A17},
\begin{equation}
\label{A18}
\tag{A18}
\left((\phi\wedge\psi)\rightarrow\chi\right)\in K.
\end{equation}
\noindent
Next we show that
\begin{equation}
\label{A19}
\tag{A19}
\left((\phi\wedge\psi)\rightarrow\chi\right)\in K\div(\phi\rightarrow\psi).
\end{equation}
If $(\phi\rightarrow\psi)\notin K$ then, by ($K-3)$, $K\subseteq K\div(\phi\rightarrow\psi)$ and thus \eqref{A19} follows from  \eqref{A18}. If $(\phi\rightarrow\psi)\in K$ then, by ($K-5)$, $K\subseteq Cn(K\div(\phi\rightarrow\psi)\cup\{\phi\rightarrow\psi\})$ so that, by \eqref{A18}, $\left((\phi\wedge\psi)\rightarrow\chi\right)\in Cn\left(K\div(\phi\rightarrow\psi)\cup\{\phi\rightarrow\psi\}\right)$, that is (since, by ($K-1$), $K\div(\phi\rightarrow\psi)$ s deductively closed) $(\phi\rightarrow\psi)\rightarrow\left((\phi\wedge\psi)\rightarrow\chi\right)\in K\div(\phi\rightarrow\psi)$. Since $(\phi\rightarrow\psi)\rightarrow\left((\phi\wedge\psi)\rightarrow\chi\right)$ is logically equivalent to $\left((\phi\rightarrow\psi)\wedge(\phi\wedge\psi)\right)\rightarrow\chi$, which, in turn is logically equivalent to $(\phi\wedge\psi))\rightarrow\chi$, \eqref{A19} is satisfied. It follows from \eqref{A18}, \eqref{A19} and \eqref{A15} that $\big((\phi\wedge\psi)\rightarrow\chi\big)\in K\div\lnot\phi$, that is, that $\chi\in Cn\big(K\div\lnot\phi\,\cup\,\{\phi\wedge\psi\}\big)$, thus establishing \eqref{A16}. From \eqref{A16} we get that
\begin{equation}
\label{A20}
\tag{A20}
\Vert Cn\left(K\div\lnot\phi\,\cup\,\{\phi\wedge\psi\}\right)\Vert\,\subseteq\, \Vert Cn\left(K\div\lnot(\phi\wedge\psi)\,\cup\,\{\phi\wedge\psi\}\right)\Vert
\end{equation}
\noindent
By Lemma \ref{LEM:lem1} (with $A=K\div\lnot\phi$ and $\alpha=\phi\wedge\psi$), $\Vert Cn\left(K\div\lnot\phi\,\cup\,\{\phi\wedge\psi\}\right)\Vert$ $=\Vert K\div\lnot\phi\Vert\,\cap\,\Vert\phi\wedge\psi\Vert$  which in turn (since =$\Vert\phi\wedge\psi\Vert=\Vert\phi\Vert\cap\Vert\psi\Vert$) is equal to $\Vert K\div\lnot\phi\Vert\cap\Vert\phi\Vert \cap\Vert\psi\Vert$.  By Lemma \ref{LEM:lem1} again (with $A=K\div\lnot(\phi\wedge\psi)$ and $\alpha=\phi\wedge\psi$),  $ \Vert Cn(K\div\lnot(\phi\wedge\psi)\,\cup\,\{\phi\wedge\psi\})\Vert=\Vert K\div\lnot(\phi\wedge\psi)\Vert\,\cap\,\Vert\psi\wedge\psi\Vert$. Hence \eqref{A13} follows from \eqref{A20}.
\paragraph{Property 4(e)} Since, by \eqref{A11}, $\forall s,s'\in\mathcal B(s_@)$, $f(s,\Vert\phi\Vert)=f(s',\Vert\phi\Vert)=\Vert K\div \lnot\phi\Vert\,\cap\,\Vert\phi\Vert$, it is sufficient to show that if $\Vert K\div \lnot\phi\Vert\,\cap\,\Vert\phi\Vert\,\cap\,\Vert\psi\Vert\ne\varnothing$ then $\Vert K\div\lnot(\phi\wedge\psi)\Vert\,\cap\,\Vert\phi\wedge\psi\Vert \,\subseteq\,\Vert K\div \lnot\phi\Vert\,\cap\,\Vert\phi\Vert\,\cap\,\Vert\psi\Vert$.
Assume that $\Vert K\div \lnot\phi\Vert\,\cap\,\Vert\phi\Vert\,\cap\,\Vert\psi\Vert=\Vert K\div \lnot\phi\Vert\,\cap\,\Vert\phi\wedge\psi\Vert\ne\varnothing$. Then
\begin{equation}
\label{A21}
\tag{A21}
\lnot(\phi\wedge\psi)\notin\, K\div\lnot\phi.
\end{equation}
\noindent
Since $\lnot\phi$ is logically equivalent to $\lnot(\phi\wedge\psi)\wedge\lnot\phi$, by ($K-6$)
\begin{equation}
\label{A22}
\tag{A22}
K\div\lnot\phi = K\div\left(\lnot(\phi\wedge\psi)\wedge\lnot\phi\right).
\end{equation}
\noindent
Thus, by \eqref{A21} and \eqref{A22},
\begin{equation}
\label{A23}
\tag{A23}
\lnot(\phi\wedge\psi)\notin\, K\div\left(\lnot(\phi\wedge\psi)\wedge\lnot\phi\right).
\end{equation}
\noindent
By ($K-8$), $\forall \alpha,\beta\in\Phi_0$, if $\alpha\notin K\div(\alpha\wedge\beta)$ then $K\div(\alpha\wedge\beta)\subseteq K\div\alpha$. Thus, by \eqref{A23} and ($K-8$) (with $\alpha=\lnot(\phi\wedge\psi)$ and $\beta=\lnot\phi$), $K\div\left(\lnot\phi\wedge\lnot(\phi\wedge\psi)\right)\subseteq K\div\lnot(\phi\wedge\psi)$. It follows from this and \eqref{A22} that $K\div\lnot\phi\subseteq K\div\lnot(\phi\wedge\psi)$ and thus
\begin{equation}
\label{A24}
\tag{A24}
\Vert K\div\lnot(\phi\wedge\psi)\Vert\subseteq\Vert K\div\lnot\phi\Vert.
\end{equation}
\noindent
Intersecting both sides of \eqref{A24} with $\Vert\phi\wedge\psi\Vert=\Vert\phi\Vert\cap\Vert\psi\Vert$ we get
$\Vert K\div\lnot(\phi\wedge\psi)\Vert\cap\Vert\phi\wedge\psi\Vert\,\subseteq\,\Vert K\div\lnot\phi\Vert\cap\Vert\phi\Vert\cap\Vert\psi\Vert$, as desired.
\medskip\par
To complete the proof of Part (B) of Proposition 1 we need to show that
\begin{equation*}
\begin{array}{*{20}{l}}
 \psi\in K\div \phi\text{ if and only if }&(1)\, \mathcal B(s_@)\subseteq\Vert\psi\Vert,\text{ and } \\
{}&(2)\, \forall s\in\mathcal B(s_@), f\left(s,\Vert\lnot\phi\Vert\right)\subseteq\Vert\psi\Vert.
\end{array}
\end{equation*}
By \eqref{A11}, $\forall s\in\mathcal B(s_@)=\Vert K\Vert$, $f\left(s,\Vert\lnot\phi\Vert\right)=\Vert K\div\phi\Vert\cap\Vert\lnot\phi\Vert$. Thus we have to show that
\begin{equation}
\label{A25}
\tag{A25}
\psi\in K\div\phi\text{ if and only if } \Vert K\Vert\subseteq\Vert\psi\Vert\text{ and } \Vert K\div\phi\Vert\cap\Vert\lnot\phi\Vert\subseteq\Vert\psi\Vert.
\end{equation}
First we establish a lemma.
\begin{lemma}
\label{LEM:lem3}
$\forall\phi\in\Phi_0$,
\begin{enumerate}
  \item[($i$)] if $A\subseteq\Phi_0$ is such that $A=Cn(A)$, then $A=Cn\left(A\cup\{\phi\}\right)\,\cap\,Cn(A\cup\{\lnot\phi\})$
  \item[($ii$)] $K\div\phi=K\cap Cn(K\div\phi\cup\{\lnot\phi\})$
\end{enumerate}
\end{lemma}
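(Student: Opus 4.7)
My plan is to handle the two parts in order, using part (i) as the key input for part (ii).

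For part (i), the inclusion $A\subseteq Cn(A\cup\{\phi\})\cap Cn(A\cup\{\lnot\phi\})$ is immediate from $A\subseteq A\cup\{\phi\}$ (and likewise for $\lnot\phi$) together with monotonicity of $Cn$. For the reverse inclusion, I would fix $\psi$ in the intersection and exploit the fact that, because $A=Cn(A)$, the equivalence $\chi\in Cn(A\cup\{\alpha\})\Leftrightarrow (\alpha\to\chi)\in A$ (which is the content of \eqref{EQ:lem1} already used in Lemma \ref{LEM:lem1}) gives both $(\phi\to\psi)\in A$ and $(\lnot\phi\to\psi)\in A$. Since $\bigl((\phi\to\psi)\wedge(\lnot\phi\to\psi)\bigr)\to\psi$ is a tautology, deductive closure of $A$ yields $\psi\in A$.

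For part (ii), the inclusion $K\div\phi\subseteq K\cap Cn\bigl(K\div\phi\cup\{\lnot\phi\}\bigr)$ is immediate: $K\div\phi\subseteq K$ holds by (K-2), and $K\div\phi\subseteq Cn(K\div\phi\cup\{\lnot\phi\})$ is trivial. For the reverse inclusion, fix $\psi\in K\cap Cn(K\div\phi\cup\{\lnot\phi\})$. I would split on whether $\phi\in K$. If $\phi\notin K$, then (K-2) and (K-3) together force $K\div\phi=K$, so $\psi\in K\div\phi$ directly. If $\phi\in K$, then the recovery axiom (K-5) gives $K\subseteq Cn(K\div\phi\cup\{\phi\})$, hence $\psi\in Cn(K\div\phi\cup\{\phi\})$. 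Applying part (i) to the deductively closed set $A=K\div\phi$ (closure by (K-1)) now yields $\psi\in K\div\phi$, since $\psi$ lies in both $Cn(K\div\phi\cup\{\phi\})$ and $Cn(K\div\phi\cup\{\lnot\phi\})$.

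The step requiring the most care is the case $\phi\in K$ in part (ii): it is exactly there that the recovery postulate is used to convert ``$\psi\in K$'' into ``$(\phi\to\psi)\in K\div\phi$'', which is what feeds part (i). Beyond (K-1), (K-2), (K-3) and (K-5), no other AGM postulates are invoked, and no appeal to the selection-function semantics is needed.
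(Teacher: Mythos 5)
Your proof is correct and follows essentially the same route as the paper's: part (i) via the deduction-theorem equivalence and the tautology $\bigl((\phi\to\psi)\wedge(\lnot\phi\to\psi)\bigr)\to\psi$, and part (ii) by case-splitting on $\phi\in K$, using ($K-$2)/($K-$3) in one case and ($K-$5) plus part (i) applied to $A=K\div\phi$ in the other. The only difference is organizational (you prove the two inclusions of (ii) separately rather than establishing the equality $K=Cn(K\div\phi\cup\{\phi\})$ outright), which changes nothing of substance.
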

\begin{proof}
($i$) Let $A\subseteq\Phi_0$ be such that $A=Cn(A)$. Since $A\subseteq Cn\left(A\cup\{\phi\}\right)$ and $A\subseteq Cn\left(A\cup\{\lnot\phi\}\right)$, $A\subseteq Cn\left(A\cup\{\phi\}\right)\,\cap\,Cn\left(A\cup\{\lnot\phi\}\right)$. Conversely, suppose that $\chi\in Cn\left(A\cup\{\phi\}\right)\,\cap\,Cn\left(A\cup\{\lnot\phi\}\right)$. Then both $\phi\rightarrow\chi$ and $\lnot\phi\rightarrow\chi$ belong to $A$ and thus so does their conjunction. Since $(\phi\rightarrow\chi)\wedge(\lnot\phi\rightarrow\chi)$ is logically equivalent to $\chi$ it follows that $\chi\in A$.\\
($ii$) We need to consider two cases. \\
Case 1: $\phi\in K$. Then, by ($K-5$), $K\subseteq Cn\left(K\div\phi\cup\{\phi\}\right)$. By ($K-2$), $K\div\phi\subseteq K$, so that $Cn\big(K\div\phi\cup\{\phi\}\big)\subseteq Cn\big(K\cup\{\phi\}\big)=Cn(K) = K$ (by hypothesis, $K$ is deductively closed). Thus
\begin{equation}
\label{A26}
\tag{A26}
K= Cn\left(K\div\phi\cup\{\phi\}\right)
\end{equation}
\noindent
By Part ($i$) (with $A=K\div\phi$, which, by ($K-1$), is deductively closed),
\begin{equation}
\label{A27}
\tag{A27}
K\div\phi= Cn\left(K\div\phi\cup\{\phi\}\right)\,\cap\,Cn\left(K\div\phi\cup\{\lnot\phi\}\right)
\end{equation}
Thus, by \eqref{A26} and \eqref{A27}, $K\div\phi=K\cap Cn(K\div\phi\cup\{\lnot\phi\})$.\\
Case 2: $\phi\notin K$. Then, by ($K-2$) and ($K-3$),
\begin{equation}
\label{A28}
\tag{A28}
K\div\phi=K
\end{equation}
By Part ($i$) (with $A=K$)
\begin{equation}
\label{A29}
\tag{A29}
K=Cn\left(K\cup\{\phi\}\right)\cap Cn\left(K\cup\{\lnot\phi\}\right)
\end{equation}
From \eqref{A29} we get that $K\cap Cn\left(K\cup\{\lnot\phi\}\right)=Cn\left(K\cup\{\phi\}\right)\cap Cn\left(K\cup\{\lnot\phi\}\right)=K$. Thus, by \eqref{A28}, $K\div\phi=K\cap Cn\left(K\cup\{\lnot\phi\}\right)$, from which, by using \eqref{A28} again to replace the second instance of $K$ with $K\div\phi$, we get $K\div\phi=K\cap Cn\left(K\div\phi\cup\{\lnot\phi\}\right)$
\end{proof}

Now we are ready to prove \eqref{A25}, namely that
\begin{equation*}
\psi\in K\div\phi\text{ if and only if } \Vert K\Vert\subseteq\Vert\psi\Vert,\text{ and } \Vert Cn\left(K\div\phi\cup\{\lnot\phi\}\right)\Vert\subseteq\Vert\psi\Vert.
\end{equation*}
Let $\psi\in K\div\phi$. By ($ii$) of Lemma \ref{LEM:lem3}, $K\div\phi=K\cap Cn\left(K\div\phi\cup\{\lnot\phi\}\right)$; thus $\psi\in K$, that is, $\Vert K\Vert\subseteq\Vert\psi\Vert$, and $\psi\in Cn\left(K\div\phi\cup\{\lnot\phi\}\right)$, that is, $\Vert Cn(K\cup\{\lnot\phi\})\Vert\subseteq\Vert\psi\Vert$. Conversely, suppose that $ \Vert K\Vert\subseteq\Vert\psi\Vert$ and $ \Vert Cn\left(K\div\phi\cup\{\lnot\phi\}\right)\Vert\subseteq\Vert\psi\Vert$, that is, $\psi\in K\,\cap\, Cn\left(K\div\phi\cup\{\lnot\phi\}\right)$. Then, by ($ii$) of Lemma \ref{LEM:lem3}, $\psi\in K\div\phi$. \quad $\square$

\nocite{*}
\bibliographystyle{eptcs}
\bibliography{Contraction}
\end{document}